\newtheorem{theorem}{Theorem}
\newtheorem{lemma}{Lemma}
\def\shownotes{0}  
\newcommand{\authnote}[2]{{$\ll$\textsf{\footnotesize #1 notes: #2}$\gg$}}
\newcommand{\authnote}[2]{}
\newcommand{\yingyu}[1]{{\color{blue}\authnote{Yingyu}{{#1}}}}
\newcommand{\veps}{\varepsilon}
\newcommand{\bI}{\bold{I}}
\newcommand{\bM}{\bold{M}}
\newcommand{\bX}{\bold{X}}
\newcommand{\bY}{\bold{Y}}
\newcommand{\bV}{\bold{V}}
\newcommand{\bU}{\bold{U}}
\newcommand{\bSigma}{\bold{\Sigma}}	
\newcommand{\E}{\mathbb{E}}
\DeclareMathOperator*{\argmin}{arg\,min}
\DeclareMathOperator*{\sign}{sign}
\newcommand{\set}[1]{\mathcal{#1}}
\title{Learning Mixtures of Linear Regressions with Nearly Optimal Complexity}
\date{}
\author{Yuanzhi Li\thanks{Princeton University, Computer Science Department, email: yuanzhil@cs.princeton.edu} \and Yingyu Liang\thanks{University of Wisconsin-Madison, Computer Sciences Department, email: yliang@cs.wisc.edu}}
\begin{document}

\maketitle

\begin{abstract}
Mixtures of Linear Regressions (MLR) is an important mixture model with many applications. In this model, each observation is generated from one of the several unknown linear regression components, where the identity of the generated component is also unknown. Previous works either assume strong assumptions on the data distribution or have high complexity. This paper proposes a fixed parameter tractable algorithm for the problem under general conditions, which achieves global convergence and the sample complexity scales nearly linearly in the dimension. In particular, different from previous works that require the data to be from the standard Gaussian, the algorithm allows the data from Gaussians with different covariances. When the conditional number of the covariances and the number of components are fixed, the algorithm has nearly optimal sample complexity $N = \tilde{O}(d)$ as well as nearly optimal computational complexity $\tilde{O}(Nd)$, where $d$ is the dimension of the data space. To the best of our knowledge, this approach provides the first such recovery guarantee for this general setting.
\end{abstract}

\section{Introduction}\label{sec:intro}
This paper studies the problem of learning Mixtures of Linear Regressions (MLR). In this model, one is given i.i.d.\ observations from a mixture of $k$ unknown linear regression components, and the goal is to recover the hidden parameters in the $k$ linear regressions.
In particular, each component $i$ has a sampling probability $p_i$, a data distribution $\set{D}_i$, a hidden parameter $w_i$, and each observation $(x,\alpha)$ is generated by first sampling a component $i$ according to $p_i$'s, then sampling $x$ from $\set{D}_i$ and setting $\alpha = \langle x, w_i\rangle$.

The MLR model is a popular mixture model and has many applications due to its effectiveness in capturing non-linearity and its model simplicity~\citep{de1989mixtures,jordan1994hierarchical,faria2010fitting,zhong2016mixed}. It has also been a recent theoretical topic for analyzing benchmark algorithms for nonconvex optimization (e.g., \citep{chaganty2013spectral,klusowski2017estimating}) or designing new algorithms~(e.g., \citep{chen2014convex}).
However, most of the existing works either restrict to very special settings (e.g., $x$ of different components all from the standard Gaussian, or only $k=2$ components)~\citep{chen2014convex,yi2014alternating,zhong2016mixed,balakrishnan2017statistical,klusowski2017estimating}, or have high sample or computational complexity far from optimal~\citep{chaganty2013spectral,sedghi2016provable}. 

Moreover, to the best of our knowledge, all the existing works require the $\mathcal{D}_i$ being identical. Most works requiring them to be the standard Gaussian, with the exception of those using tensor methods.
However, since the ultimate goal of MLR is to use different linear classifiers to capture different types of data points, it is important to allow different types to have different covariances, and was mentioned as an important open problem in \citep{sedghi2016provable}. 

We propose a novel fixed parameter tractable algorithm for learning Mixtures of Linear Regressions in a setting significantly more general than those in previous works. 
In particular, our setting allows $k \geq 2$ components of data from different distributions $\mathcal{D}_i = \mathcal{N}(0, \bSigma_i^2)$ with $\bI \preceq \bSigma_i \preceq \sigma \bI$, and only requires a necessary separation between the ground truth parameters that any two weight parameters should be at least $\Delta$ apart for some separation parameter $\Delta$.
The algorithm can recover the ground truth to any additive error $\veps$ using 
$N = d \log\left( \frac{d}{\veps}\right) \textrm{poly}\left(\frac{k\sigma}{p_{\min} \Delta}\right)  + n$  
examples and $Nd \cdot \textrm{polylog}(k,d,\sigma,\frac{1}{\veps}, \frac{1}{\Delta}, \frac{1}{p_{\min}})$ computational time, where $p_{\min} = \min_i p_i$ and $n$ is a minor term for fixed $k$. It is tractable in the number of components $k$, the bound on the differences between the different variances $\sigma$, the separation parameter $\Delta$, and the minimum proportion $p_{\min}$ of the components. When these parameters are fixed, it can recover the ground truth to any additive error $\veps$, with nearly optimal sample complexity which is nearly linear in $d$, 
and with nearly optimal computational complexity which is nearly linear in $Nd$. 

Novel algorithmic techniques are proposed since existing ones are not known to generalize to this setting. 
One main technical contribution of our work is a new ``method of moments descent'' technique, that allows us to break ties between different mixture components \emph{gradually}: Unlike most of the previous algorithms which use method of moments to obtain a warm start in one shot, we use it to find a direction to perform one ``gradient descent'' step and gradually refine our solution.  We believe our  techniques are potentially useful in even more general cases.

\paragraph{Organization.} 
Section~\ref{sec:related} reviews the related work, and 
Section~\ref{sec:preli} formalizes the problem and presents our result. An overview of the intuition for designing and analyzing the algorithm is provided in Section~\ref{sec:proofsketch} while the algorithm and the key lemmas are presented in Section~\ref{sec:algo}. The formal proofs are provided in the appendix.

\section{Related Work} \label{sec:related}

Mixtures of Linear Regressions is a popular mixture model (e.g.,~\citep{de1989mixtures,grun2007applications} and \citep{faria2010fitting}), also known as Hierarchical Mixture of Experts in~\citep{jordan1994hierarchical} in the machine learning community. 
It has many applications, such as trajectory clustering~\citep{gaffney1999trajectory} and phase retrieval~\citep{balakrishnan2017statistical}, and has as special cases some popular models, such as piecewise linear regression and locally linear regression.

Learning MLR in general is NP-hard~\citep{yi2014alternating}. Recent interests have been in providing various efficient algorithms for recovering the parameters in MLR under assumptions about the data generation model~\citep{chaganty2013spectral,chen2014convex,yi2014alternating,zhong2016mixed,klusowski2017estimating}. 
They are either under restricted assumptions about the data (mixtures of two component or $x$ all from the standard Gaussian)~\citep{chen2014convex,yi2014alternating,balakrishnan2017statistical,klusowski2017estimating}, or have high sample or computational complexity~\citep{chaganty2013spectral,sedghi2016provable}. 

Some works study specific algorithms for the problem, such as  the Expectation Maximization (EM) algorithm~\citep{khalili2007variable,yi2014alternating,balakrishnan2017statistical,klusowski2017estimating}. It is known that without careful initialization EM is only guaranteed to have local convergence~\citep{klusowski2017estimating}. A grid search method for initialization is proposed in~\citep{yi2014alternating} but is only for the two-component case. It is unclear how to generalize these guarantees to our more general setting where the data $x$ from different components are from different Gaussians.
Moreover, EM also often suffers from a high computational cost.

Another line of works used tensor methods for MLR~\citep{chaganty2013spectral,sedghi2016provable}. The third-order moment is directly estimated in~\citep{chaganty2013spectral} using samples from Gaussian distribution and is estimated from a linear regression problem in~\citep{sedghi2016provable}. A significant drawback of tensor methods is high sample and computational complexity, due to the high cost in estimating and operating over the tensors. 

\citep{chen2014convex} provided a convex relaxation formulation and showed that their algorithm is information-theoretically optimal. However, it is only for the two-component case and suffers from high computational cost in nuclear norm minimization. 

\citep{zhong2016mixed} provided a non-convex objective function that is locally strongly convex in the neighborhood of the ground truth, and proposed to first use a tensor method for initialization and then optimize the provided objective, achieving a global convergence guarantee. The overall algorithm is fixed parameter tractable in the number of components, and achieves nearly optimal sample and time complexity when this parameter is constant. However, it requires all components have the standard Gaussian distribution. It is unclear how to generalize the result to our more general setting where the data $x$ from different components are from different Gaussians. Furthermore, due to the tensor initialization, the algorithm needs complicated assumptions on the moments, while our only essential assumption is that the weight parameters can be separated, which is much simpler and more general (in fact, it is essentially necessary for obtaining any recovery guarantees).

\citep{yi2016solving} gives an improved way of using the tensor method plus alternative minimization so the sample complexity linearly depend on $d$. However, their algorithm  requires that all the data are from the standard Gaussian, and the sample complexity also depends on the minimal singular value of certain moment matrix, which can be $ \Delta^{\Omega(k)}$ small in our setting. 

\section{Problem Definition and Our Result} \label{sec:preli}

In the Mixtures of Linear Regressions (MLR) model, the data $(x, \alpha) \in \mathbb{R}^{d+1}$ is generated by
\begin{align} \label{def:mlr}
  z \sim \mbox{multinomial}(p), ~x \sim \mathcal{D}_z,~ \alpha = \langle w_z, x \rangle
\end{align}
where $p \in \mathbb{R}^k$ is the proportion of different components satisfying $\sum_{i=1}^k p_i=1$, $\mathcal{D}_i$ is the distribution of the $i$-th component, and $\{w_i \in \mathbb{R}^d\}_{i=1}^k $ are the ground truth parameters. The goal is then to recover $\{w_i\}_i$ given a dataset $\{(x_\ell, \alpha_\ell)\}_{\ell=1}^{N}$, where each $(x_\ell, \alpha_\ell)$ is i.i.d. generated by (\ref{def:mlr}).

\paragraph{Notations.} $[k]$ is used to denote the set $\{1, 2, \ldots, k\}$. With high probability or w.h.p.\ means with probability $1 - d^{-C}$ for some sufficiently large constant $C>1$. $1_\set{E}$ is the indicator function of the event $\set{E}$.

\paragraph{Assumptions.} We make the following assumptions about the distributions $\mathcal{D}_i$'s and $w_i$'s.
\begin{enumerate}
\item[\textbf{(A1)}] Each $\mathcal{D}_i = \mathcal{N}(0, \bSigma_i^2)$, where $\bI \preceq  \bSigma_i \preceq \sigma \bI$ for some $\sigma \ge 1$.

\item[\textbf{(A2)}] For every $i \in [k]$, $p_i \ge p_{\min}$ for some $p_{\min} > 0$.

\item[\textbf{(A3)}] Each $\| w_i\|_2 \leq1$, and for some $\Delta \in (0,1)$, 
$
  \| w_i - w_j \|_2 \geq \Delta
$
for any $i \neq j \in [k]$.
\end{enumerate}

Assumption \textbf{(A1)} allows the data $x$ in different components to come from Gaussian distributions with different unknown covariances.\footnote{In the standard linear regression model, the covariance of $x$ can be assumed to be the identity by doing a linear transformation. However, in the mixture of linear regression models, different components have different covariances and thus can not be simultaneously transformed to the identity since which data point comes from which component is unknown.}
This is more general than all the previous works that assume they all come from the standard Gaussian distribution. This also causes difficulties in applying known techniques for MLR, and thus requires new algorithmic approaches.  Moreover, our result can also be easily generalized to the case that the mixtures come from \emph{different} subspaces. That is, there can be zero singular values for $\Sigma_i$'s and the \emph{non-zero} singular values of each component is in $[1, \sigma]$. 

Assumption \textbf{(A2)} controls the imbalance of the components. We should require that there are enough data from each component so that it is possible to recover the corresponding parameter. On the other hand, our technique can also be generalized to the case when there is enough difference between the probabilities. In this case, we could also treat some components as noise and only recover the leading ones. 

Assumption \textbf{(A3)} assumes that the ground truth parameters are separated vectors, which is indeed required for exact recovery. Previous works also in general have some form of separation assumptions, many of which are much more sophisticated than ours (e.g.,~\citep{zhong2016mixed,yi2016solving}). 

\paragraph{Our result.} 
We are now ready to present our result formally.

%
%
%

\begin{theorem}[Main]
\label{thm:main} 
Assume the model~(\ref{def:mlr}) and assumptions \textbf{(A1)}-\textbf{(A3)}. Then Algorithm~\ref{alg:mlr} takes 
$N=d \log\left(\frac{d}{\veps}\right)\cdot \textrm{poly}\left(\frac{k\sigma}{\Delta p_{\min}} \right) +  \left( \frac{\sigma }{\Delta p_{\min} } \right)^{O(k^2)}$ 
data points and in time $Nd \cdot \textrm{polylog}(k, d, \sigma, \frac{1}{\Delta}, \frac{1}{p_{\min}}, \frac{1}{\veps}) $  outputs a set of vectors $\{v_i\}_{i=1}^k$ that with high probability satisfy
$$
  \|v_i  - w_{\pi(i)} \|_2 \le \veps, \forall i \in [k], ~\mbox{for some permutation $\pi$}.
$$ 
\end{theorem}

The theorem shows that the proposed algorithm achieves global convergence. The run time is polylog in $1/\veps$ for recovery error $\veps$, i.e., the algorithm can achieve exact recovery efficiently. Furthermore, in the case where $k, \sigma,$ $p_{\min}$, and $\Delta$ are fixed constants, the sample complexity is nearly linear in the dimension $d$ of the data space, which is nearly optimal in the key parameter $d$. 
The algorithm still works for wider range of $k, \sigma$, $p_{\min}$, and $\Delta$, but with an exponential dependence on $k$. 
%

Table~\ref{tab:previous} shows the comparison with some recent works.
Since for $k=2$ our settings and results subsumes the existing ones, we mainly compare to previous works handling multiple components $k \ge 2$. Algorithms using the tensor method have $\text{poly}(1/\veps)$ dependence~\citep{chaganty2013spectral,yi2014alternating,sedghi2016provable}.
This can be improved by using tensor method only for initialization. 
\citep{zhong2016mixed} provided such an algorithm fixed parameter tractable in the number of components, achieving $N = \tilde{O}(k^k d)$ sample complexity and $\tilde{O}(Nd)$ computational complexity. However, the result is only for the case where the components have data $x$ from the same distribution $\set{D}_i = \mathcal{N}(0, \bI)$. \citep{yi2016solving} provided an algorithm with sample complexity nearly linear in $d$ and polynomial in $k$ but again it is only for the case with $\set{D}_i = \mathcal{N}(0, \bI)$, and furthermore, the sample complexity depends on the minimal singular value of certain moment matrix, which can also be  $\left( \frac{1}{\Delta} \right)^{k}$ small in our setting.  
\citep{sedghi2016provable} provided algorithms for the case where there are $k\geq 2$ components and $\mathcal{D}_i$ are the same (but can be distributions other than Gaussians). It is based on tensor methods and when applied to Gaussian inputs has high sample and computational complexity. 


We also note that it is interesting to compare to results for learning mixture of Gaussians. When the covariance matrix is not axis-aligned, to the best of our knowledge, there is no algorithm for learning  mixture of Gaussians with sample complexity linear in the dimension. Thus, solving the mixture of Gaussian first and then rescale the covariances to identity would clearly fail in our setting. Our result shows how to make use of this small amount of side information (the label $\alpha$) to lower the sample and computational complexity significantly. We refer to for example~\citep{ashtiani2017sample} for some discussions.


\begin{table}
	\centering
\scriptsize
		\begin{tabular}{c| c| c | c}
		\hline
			    & main model assumptions  &  sample complexity $N$   & computational complexity \\
		 \hline
\multirow{2}{*}{\citep{yi2016solving}}  &  $\set{D}_i = \set{N}(0, \bI), k \ge 2$,  separation $\Delta > 0$,  & \multirow{2}{*}{ $\text{poly}(k) \frac{d}{\sigma_k^5 \Delta^2} $ }          &   \multirow{2}{*}{$\text{poly}(k) d^3$ } \\
    & singular value of some moment matrix $\sigma_k$ & & 
\\ \hline
\citep{zhong2016mixed}  &  $\set{D}_i = \set{N}(0, \bI), k \ge 2$, separation $\Delta > 0$          &  $ O(d (k \log(d))^k)$ & $O(Nd \log(d/\veps))$ 
\\ \hline
\multirow{2}{*}{\citep{sedghi2016provable}}  &  $\set{D}_i$ are the same, $k \ge 2$,  &  \multirow{2}{*}{$O\left(\frac{k^4 d^3}{\veps^2 s^2}\right) $ for Gaussian input} & \multirow{2}{*}{much higher than $\tilde{O}(d^2)$ }
\\
& singular values of weight matrix $\ge s>0$ & &
\\ \hline
\multirow{2}{*}{\citep{klusowski2017estimating}} & $\set{D}_i = \set{N}(0, \bI)$, $k = 2$, & \multirow{2}{*}{$\tilde{O}(d)$}  & \multirow{2}{*}{$\tilde{O}(Nd)$ }
\\
& local convergence of EM algorithm  & &
\\ \hline \hline
		 \multirow{2}{*}{Ours}   &  $\set{D}_i = \set{N}(0, \bSigma_i^2), \bI \preceq \bSigma_i \preceq \sigma\bI, k \ge 2$, &  \multirow{2}{*}{$ d \log\left(\frac{d}{\veps}\right) \textrm{poly}\left(\frac{k\sigma}{\Delta} \right)$ + minor term}   & \multirow{2}{*}{$\tilde{O}(Nd)$  }
		\\
		& separation $\|w_i - w_j\| \ge \Delta > 0 (\forall i\neq j)$&  
		\\
		\hline
		\end{tabular}
	\caption{Comparison with some recent related works. Please refer to the papers for details about the model assumptions and dependence on some other less important parameters, which are omitted here for clarity. In particular, the separation parameters in the related work have different meaning from ours and more complicated. \yingyu{k2}}
	\label{tab:previous}
\end{table}
\normalsize

\section{Overview} \label{sec:proofsketch}

For the major part of our paper we will focus on learning the weight for one of the components. This can be iterated straightforwardly to learn all the weights, which will be presented at the end. 

Our algorithm for learning one weight has two phases. In the first phase, we use method of moments to obtain a warm start. In the second phase, we use gradient descent on a \emph{concave} function to get a more accurate solution. 

\paragraph{Method of moments algorithm}
On a high level, our algorithm is based on the following simple strategy: At each iteration $t$, we maintain a vector $a_t$, and the hope is that $\min_{i \in [k]} \{ \| \bSigma_i (w_i - a_t) \|_2\} $ is getting smaller and smaller as $t$ grows, so eventually $a_t$ will be sufficiently close to one $w_i$.  
Since $\alpha - \langle a_t, x \rangle  = \langle x, w_z - a_t \rangle $ comes from a mixture of one dimension Gaussian distributions with variances $\{ \| \bSigma_i (w_i - a_t) \|_2^2\}_{i = 1}^k$, existing algorithms such as~\citep{moitra2010settling} can be used to estimate them. Suppose the next vector $a_{t + 1}$ is simply chosen as $a_t + \eta r$ for a random vector $r \sim \mathcal{N}(0, \bI)$. With at least $1/4$ probability, we know that $r$ is positively correlated with $w_j - a_t $ for $j = \argmin_i \{ \| \bSigma_i (w_i - a_t) \|_2^2\}$, and thus $\| \bSigma_j (w_j - a_t - \eta r) \|_2^2$ will be smaller than $\| \bSigma_j (w_j - a_t  ) \|_2^2$ for sufficiently small $\eta$. If this happens, we can let $a_{t + 1} = a_t + \eta r$ as the next vector. This process is fundamentally different from many of the existing tie breaking algorithms such as~\citep{li2017convergence}, since we do not have any control over which component the algorithm is converging to: the algorithm may switch target components on the fly arbitrarily, but the minimal of $\{ \| \bSigma_i (w_i - a_t) \|_2^2\}_{i = 1}^k$ is always decreasing.

However, this simple strategy is too expensive in terms of the sample and computational complexity. In each iteration, since $r$ is just a random vector, $\| \bSigma_j (w_j - a_t - \eta r) \|_2^2$ can only be smaller than $\| \bSigma_j (w_j - a_t  ) \|_2^2$  for a factor no more than $\frac{1}{d}$. Thus, we need at least $d$ iterations to finish the whole process. Moreover, to guarantee decreasing, we need to estimate $\| \bSigma_i (w_i - a_t  ) \|_2^2$ to accuracy at least $O \left(\frac{1}{d} \right) $ in each iteration, requiring a lot of samples.

The first key idea of our algorithm is to replace sampling from $\mathcal{N}(0, \bI)$ by sampling from $ \mathcal{N}(0, \bU \bU^{\top})$ for some $\bU \in \mathbb{R}^{d \times k}$ whose span is known to contain a vector with good correlation with $\bSigma_j (w_j - a_t  )$. To get this subspace, we rely on the method of moments. Note that 
\begin{align} \label{eqn:moment}
\E[\left( \alpha - \langle a_t, x \rangle \right)^2 x x^{\top}] = \sum_{i = 1}^k p_i    \left( 2\bSigma^2_i (w_i - a_t  )(w_i - a_t  )^{\top} \bSigma_i^2 + \| \bSigma_i (w_i-a_t) \|_2^2\bSigma_i^2  \right).
\end{align}
When all $\bSigma_i = \bI$, we have $\E[\left( \alpha - \langle a_t, x \rangle \right)^2 x x^{\top}]  \propto \bI + \bU \bU^{\top} $ for some $\bU \in  \mathbb{R}^{d \times k}$ whose span is the subspace spanned by $\bSigma_i^2 (w_i - a_t  )$'s. In this case, using a random vector from $\bU$ will make the per-iteration improvement as large as $1/k$, much better than a random vector from the entire space. 

However, such simple process does not carry on to the case when $\bSigma_i$'s are different, since they are reweighed by $ \| \bSigma_i (w_i - a_t  ) \|_2^2  $ in the summation~(\ref{eqn:moment}). As mentioned, we have little control over this reweighing so $\sum_{i= 1}^k p_i \| \bSigma_i (w_i-a_t) \|_2^2\bSigma_i^2$ can be arbitrarily away from $\bI$. 

The second key idea of our algorithm is to combine higher moments with the polynomial method to obtain a good subspace $\bU$. We will use a set of carefully designed coefficients $c_0, \cdots, c_k$ 
such that in the summation $\sum_i c_i \E[\left( \alpha - \langle a_t, x \rangle \right)^{2i} x x^{\top}]$,  the $\bSigma_i^2$ terms will get canceled and all the $\bSigma_i^2 (w_i - a_t  )(w_i - a_t  )^{\top} \bSigma_i^2 $ terms get preserved. 
The $\{c_i\}_{i=0}^k$ are the coefficients of a polynomial constructed to have properties that can ensure the cancellation and preservation. 
More intuition about the construction of this polynomial is given later in Section~\ref{sec:warm_start}. 

We note that many previous algorithms use tensor decomposition as the method of moments gadget (e.g.,~\citep{sedghi2016provable,zhong2016mixed}) to learn the mixtures in one shot. Their algorithms, while being novel and inspiring, either require the data distribution for different components to be spherical Gaussian, or have high complexity to tolerate derivation from spherical Gaussian.

\paragraph{Gradient descent algorithm}
If we only use the method of moments, then we will need $\left(\frac{\sigma}{\veps}\right)^{O(k)}$ sample to achieve error $\veps$. The dependence on $\veps$ is not desired. To achieve the polylog dependence on the final error $\veps$, we only use the method of moments to get a warm start, and then apply gradient descent beginning from the warm start. 

This step is a ``local'' convergence step by using gradient descent to minimize the \emph{concave} function
$$
  g(v) = \E[\log(|\langle w - v, x \rangle| + \zeta)].
$$
Without $\zeta$, the approach is similar to the classical Gravitational allocation~\citep{holden2017gravitational}. However, without it, when $v$ is very close to one of the $w_i$'s,  $\log(|\langle w - v, x \rangle| )$ will be close to zero and becomes less \emph{smooth}. Thus, we add $\zeta$ to ensure smoothness for the convergence of SGD. As we will show, even with a fairly large $\zeta$, SGD will converge \emph{with high probability}. Similar local convergence algorithms were also used in previous works (e.g., \citep{klusowski2017estimating}). However, with our objective function, the proof is \emph{significantly simpler}. 
 
The proof is by lower bounding the correlation between the negative gradient and the difference of the current solution from the ground truth, and then applying standard optimization analysis to get the convergence. The correlation is (a variant) of inverse Gaussians and thus can be bounded; see Section~\ref{sec:gd} for more intuition.


\section{Algorithm} \label{sec:algo}

In this section, we describe our algorithm in three subsections, describing the three parts as mentioned in the overview respectively.


\subsection{Warm Start for Learning One of the Weights} \label{sec:warm_start}

Here we present our algorithm for obtaining a warm start for the weight for one of the components $w_i$, whose
algorithmic ideas and analysis are at the core of this paper.
This algorithm outputs a point $a_T$ such that $\min\{\|a_T - w_i\|_2\}_{i = 1}^k \leq O(\sigma^2\veps)$. The total sample complexity and running time of this algorithm are proportional to $\left(\frac{\sigma}{\veps} \right)^{O(k^2)}$. Eventually, we will take $\veps = \text{poly}\left(\frac{p_{\min}\Delta}{\sigma}\right)$ to enter the warm start for the gradient descent in the next subsection. 

\textsc{MomentDescent} (Algorithm~\ref{alg:one}) describes the details.
It begins with $a_0 = 0$ and iterates to improve it to $a_T$. 
In each iteration, it first uses a set of samples to compute two quantities: $\sigma_t^2$ which is an estimation of $\min \{\|\bSigma_i^2 (w_i - a_t)\|_2\}_{i=1}^k$, and $\bU_t$ which is an estimation of the span of $\{\bSigma_i^2 (w_i - a_t)\}_{i=1}^k$. Then it picks a random vector $v$ from the span of $\bU_t$ and tests if moving $a_t$ along $v$ can decrease $\sigma_t^2$; this is repeated a few times to guarantee success with high probability.

\begin{algorithm}[!t]
\caption{ \textsc{MomentDescent}($k , \delta, \veps$) \label{alg:one}}
\begin{algorithmic}[1]
\REQUIRE  Number of mixture components $k$, failure probability $\delta$, and error $\veps$. 
\ENSURE $a_T$ which is close to some $w_i$ up to error $O(\sigma^2\veps)$ with probability $1 -\delta$. 
\STATE $a_0 \leftarrow 0$. Set $T \leftarrow \Theta(k \sigma \log \frac{\sigma}{\veps}) $ and $q \leftarrow \Theta\left(\log \frac{k\sigma}{\veps\delta} \right)$. 
\FOR{$t = 0, 1, \cdots , T-1$}
\STATE Sample $m  = (\frac{\sigma}{p_{\min}\veps})^{O(k^2)}$
many samples $\{( x_i, \alpha_i) \}_{i = 1}^m $. 
\STATE For every $i \in [m]$, $\alpha_i \leftarrow \alpha_i - \langle x_i, a_t \rangle$.
\STATE  Let $\{ \sigma_i^2\}_{i = 1}^k \leftarrow \textsc{OneDMixture} ( \{ \alpha_i \}_{i = 1}^m, k,  \veps^2/(k\sigma)^2)$.\STATE Let $\sigma_t^2 \leftarrow \min \{ \sigma_i^2\}_{i = 1}^k$. 
\STATE $\bU_t \leftarrow  \textsc{Powerw}(\{ x_i \}_{i = 1}^m, \{ \alpha_i \}_{i = 1}^m, k,  \veps)$
\FOR{$j \in [q]$}
\STATE Pick a random $\gamma \in \mathbb{R}^k$ such that $\gamma \sim \mathcal{N}(0, \bI)$ and let $v = \frac{\bU_t \gamma}{\| \bU_t \gamma \|_2}$. 
\STATE  Sample $m$ many samples $\{ (x_i, \alpha_i) \}_{i = 1}^m $. 
\STATE  For every $i \in [m]$, let $\alpha_i' \leftarrow \alpha_i - \langle x_i, a_t + \eta_t v \rangle$, where $\eta_t = \Theta\left(\frac{\sigma_t}{\sigma\sqrt{k}} \right)$.
\STATE   Let $\{( \sigma_i')^2\}_{i = 1}^k \leftarrow \textsc{OneDMixture} ( \{\alpha_i' \}_{i = 1}^m, k,  \veps^2/(k\sigma)^2)$, 
\STATE Let $(\sigma')^2 \leftarrow \min \{ (\sigma_i')^2\}_{i = 1}^k$
\IF{ $(\sigma')^2  \leq \left(1 - \frac{1}{150 k \sigma}\right) \sigma_t^2 $}
\STATE $a_{t + 1} \leftarrow a_t + \eta_t v$.
\STATE $\bold{break;}$
\ENDIF
\ENDFOR
\ENDFOR
\end{algorithmic}
\end{algorithm}

\begin{algorithm}[!t]
\caption{ \textsc{OneDMixture} ($\{ z_i \}_{i = 1}^m, k, \veps$) \label{alg:1_d}}
\begin{algorithmic}[1]
\REQUIRE $\{ z_i \}_{i = 1}^m$ where each $z_i \in \mathbb{R}$ comes from a mixture of one dimension (mean zero) Gaussian distribution, number of mixture components $k$, and error $\veps$.
\ENSURE $\{\sigma_i^2\}_{i = 1}^k$, the variance of each component up to additive error $\veps$.
\STATE See the algorithm in~\citep{moitra2010settling}. Their theorem implies that the output is up to additive error $\veps$ with $O\left(\frac{\sigma_{\max}}{ p_{\min}\veps } \right)^{O(k)}$ samples, where $\sigma_{\max}^2$ is the maximum variance of those mixtures and $ p_{\min}$ is the minimal probability that one mixture occurs.)
\end{algorithmic}
\end{algorithm}

\begin{algorithm}[!t]
\caption{ \textsc{Powerw}($\{ x_i \}_{i = 1}^m, \{ \alpha_i \}_{i = 1}^m, k,  \veps$) \label{alg:pww}}
\begin{algorithmic}[1]
\REQUIRE  $\{ x_i \}_{i = 1}^m$ where each $x_i \in \mathbb{R}^d$ comes from a mixture of Gaussian distributions, and $\alpha_i$ the label of $x_i$, number of mixture components $k$, and error $\veps$
\ENSURE $\bU \in \mathbb{R}^{d \times k}$, $\veps$ close to the subspace spanned by $\bSigma_1^2 w_1, \cdots, \bSigma_k^2 w_k$
\STATE $\{ \sigma_i^2\}_{i = 1}^k \leftarrow \textsc{OneDMixture} ( \{ \alpha_i \}_{i = 1}^m, k,  \veps^{(g)})$ for $\veps^{(g)} = \left(\frac{\veps}{\sigma} \right)^{4k}  $.
\STATE $\{ c_i \}_{i = 0}^k \leftarrow \textsc{Coeff}(\{ \sigma_i^2 \}_{i = 1}^k, \veps^{(p)})$ for $\veps^{(p)} = \veps$.
\STATE \begin{align}\bM \leftarrow \frac{1}{m} \sum_{p = 0}^k \frac{c_p}{(2p - 1)!!}  \sum_{i = 1}^m \alpha_i^{2p} x_i x_i^{\top}.
\end{align}
\STATE $\bU \leftarrow $ the top-$k$ singular vectors of $\bM$.
\end{algorithmic}
\end{algorithm}

\begin{algorithm}[!t]
\caption{ \textsc{Coeff}($\{ r_i \}_{i = 1}^k, \veps$) \label{alg:coeff}}
\begin{algorithmic}[1]
\REQUIRE  $\{ r_i \}_{i = 1}^k$ where each $r_i \in \mathbb{R}$, and error $\veps$.
\ENSURE $\{ c_i \}_{i = 0}^k$ where each $c_i \in \mathbb{R}$.
\STATE Let $z_1, \cdots, z_s$ be a center of $r_1, \cdots, r_k$ defined by Lemma~\ref{lem:cluster}.
\STATE Let $c_i$ be the coefficient of $x^{2i}$ in the polynomial: \begin{align}
f(x) = \prod_{p = 1}^s (x^2 - z_p).
\end{align}
\end{algorithmic}
\end{algorithm}

\textsc{MomentDescent} uses two subroutines. \textsc{OneDMixture } (Algorithm~\ref{alg:1_d}) is adopted from existing work~\citep{moitra2010settling} and is used to compute $\sigma_t^2$, an estimation of $\min \{\|\bSigma_i^2 (w_i - a_t)\|_2\}_{i=1}^k$. 
So we focus on the other subroutine \textsc{Powerw} (Algorithm~\ref{alg:pww}).

$\textsc{Powerw}$ tries to identify the subspace spanned by $\{\bSigma_i^2 w_i\}_{i=1}^k$, given labels $\alpha_\ell$ from regression weights $\{w_i\}_{i=1}^k$.\footnote{When used in \textsc{MomentDescent}, it is given labels $\alpha_\ell$ from regression weights $(w_i - a_t)$'s, so it will estimate the subspace spanned by $\{\bSigma_i^2 (w_i - a_t)\}_{i=1}^k$.}
As mentioned in the overview, the moments will contain both the signal $\bSigma_i w_i w_i^{\top} \bSigma_i$ and the noise $\bSigma_i^2 $. For example,
\begin{align*} 
\E[\alpha^2 x x^{\top}] = \sum_{i = 1}^k p_i     \left( 2\bSigma_i w_i w_i^{\top} \bSigma_i + \| \bSigma_i w_i \|_2^2 \bSigma_i^2  \right).
\end{align*}
The crucial piece here is to mix the moments with carefully designed coefficients $\{c_p\}_{p=0}^k$, so that $\E[\bM] = \sum_{p = 0}^k \frac{c_p}{(2p - 1)!!} \E[\alpha^{2p} x x^{\top}]$ will mostly contain only the signal.
Later, we will show that if we let $c_p$ to be the coefficients of $z^{2p}$ in some polynomial $f(z) = \prod_{p = 1}^s (z^2 - z_p)$ with carefully chosen $z_1, \cdots, z_s$ that are closely related to $\{ \| \bSigma_i w_i\|_2^2\}_{i = 1}^k$, then 
$$
 \E[\bM] = \sum_{i=1}^k p_i (\bX_i + \bY_i)
$$
where $ \bX_i$ is proportional to $\bSigma_i^2 w_i w_i^{\top} \bSigma_i^2  f'( \| \bSigma_i w_i\|_2)$ and $\bY_i$ is proportional to $\bSigma_i^2 f(\| \bSigma_i w_i \|_2)$.
Therefore, if $j = \argmin_i \| \bSigma_i w_i \|_2$, then we would like $f$ to be small and $f'( \| \bSigma_j w_j\|_2)$ to be large. Furthermore, we would like $f'$ and $f''$ to be bounded to tolerate errors in estimating $\| \bSigma_i w_i \|_2$'s. 

The following lemma shows that such a polynomial can be efficiently constructed. Using this lemma, \textsc{Coeff} (Algorithm~\ref{alg:coeff}) constructs the coefficients $c_p$'s which are used in \textsc{Powerw}. 

\begin{lemma}[Coefficients]\label{lem:cluster}
For every $k \geq 2$, every $\rho > 1$, every $r_1, \cdots, r_k \in [\frac{1}{\rho}, \rho]$, and every $\veps > 0$, one can find in time $O(k\log k)$ an integer $0<s\le k$ and centers $1/\rho \leq z_1 \leq  \cdots \leq z_s \leq \rho$ such that for $f(x) = \prod_{p = 1}^s (x^2 - z_p)$ the following holds.
\begin{enumerate}
\item For $r = \min\{r_i\}_{i = 1}^k$ and every $i \in [k]$, $|f(\sqrt{r_i})| \leq \veps |\sqrt{r}f'(\sqrt{r}) |  $. 
\item $|\sqrt{r}f'(\sqrt{r})| \geq  \left( \frac{ \veps}{\rho} \right)^k$. 
\item For all $x$ with $x^2\in [1/\rho, \rho]$, $|f'(x)| \leq 2k \rho^k$ and $|f''(x)| \leq 4k^2 \rho^k$.
\end{enumerate}
\end{lemma}

Putting things together, we can prove the main lemma regarding the per-iteration improvement of Algorithm~\ref{alg:one}.

\begin{lemma}\label{lem:one_iter}
For every $t \in \{0, 1, \cdots, T - 1\}$ and $\delta > 0$, as long as $\sigma_t = \Omega ( \sigma \veps)$, then with probability at least $1 - \delta$,
$$
\sigma_{t + 1}^2 \leq \left(1 - \frac{1}{200 k \sigma} \right) \sigma_t^2.
$$
\end{lemma}

Using this Lemma and by the choice of our parameters we immediately have the following guarantee for the output of Algorithm~\ref{alg:one}.

\begin{lemma}\label{lem:warmstart}
With probability at least $1-\delta$, $\min_i \|w_i - a_T\|_2 \le O(\sigma^2\veps)$.
\end{lemma}

\subsection{Learning One of the Weights from Warm Start}\label{sec:gd}

\begin{algorithm}[!h]
\caption{ \textsc{GradientDescent}($k , v, \veps$) \label{alg:gd}}
\begin{algorithmic}[1]
\REQUIRE $k$ the number of clusters, a warm start $v$, and the final error $\veps$. 
\ENSURE $v^{(T)}$, recovered weight parameter up to additive error $\veps$.
\STATE Let $v^{(0)} \leftarrow v$, $T \leftarrow \Theta\left(\frac{d}{p_{\min}^2} \log \frac{\zeta}{\veps}\right)$, where $\zeta= \min\left\{\frac{\Delta}{2\sigma}, \frac{\Delta p_{\min}}{64 } \right\}$.
\FOR{$t = 0, 1 , \cdots, T - 1$}
\STATE Sample $m  = \text{poly}\left( \frac{1}{\Delta}, \frac{1}{p_{\min}}, \sigma, \log T\right)$ many samples $\set{S}_{t + 1} = \{ x_i, \alpha_i \}_{i = 1}^m$.
\STATE Update: For properly chosen learning rate $\eta_t = \Theta\left( \frac{\zeta p_{\min}}{d} \right) \times \left(1 - \Theta\left(\frac{p_{\min}^2}{d} \right)\right)^t$ \begin{align}
 v^{(t + 1)} = v^{(t)} + \eta_t \frac{1}{|\set{S}_{t+1}|}\sum_{(x,\alpha) \in \set{S}_{t+1}} \frac{\sign(\alpha - \langle v^{(t)}, x \rangle)}{|\alpha - \langle v^{(t)}, x \rangle| + \zeta} x .
\end{align}
\ENDFOR
\end{algorithmic}
\end{algorithm}

Here we describe how to use gradient descent on a concave function for faster convergence to one of the $w_i$'s, given the warm start computed by the algorithm in the last subsection.

Algorithm~\ref{alg:gd} describes the details. The gradient descent is to minimize the function
$$
  g(v) = \E[\log(|\langle w - v, x \rangle| + \zeta)]
$$
where $\zeta$ is added to make the $\log(\cdot)$ smooth.
The key property used is that we have a large correlation between the negative gradient and the difference of the current solution from the ground truth. Suppose we begin with a warm start close enough to $w_1$, then the correlation is
$ \E \left[\frac{\sign(\alpha - \langle v^{(t)}, x \rangle) \langle w_1 - v^{(t)}, x \rangle}{|\alpha - \langle v^{(t)}, x \rangle| + \zeta} \right]$.
This is (a variant of) inverse Gaussians and can be bounded by a function of the norms $\| w_i - v^{(t)}\|_2$ for $i \in [k]$. Since $\| w_1 - v^{(t)}\|_2$ is much smaller than the other norms $\| w_i - v^{(t)}\|_2$ for $i\neq 1$, the correlation can be shown to be large. 
The convergence then follows from standard analysis.

\begin{lemma}[Gradient descent] \label{lem:gradient}
Suppose there exists $i \in [k]$ such that $\| w_i - v \|_2 \le \zeta / \sigma$. Then with high probability, Algorithm~\ref{alg:gd} outputs a vector $v^{(T)}$ such that
$
  \| w_{i} - v^{(T)} \| \le \veps.
$
\end{lemma}

\subsection{Learning All the Weights}  \label{sec:algo_learn_all}

Here we describe our final algorithm for learning all the weights. It uses the algorithm in the previous subsections to learn the weight of one of the components, removes the data points from that component, and repeats. Note that we can learn the weight up to error $\veps_g$ in time $\log(1/\veps_g)$, so $\veps_g$ can be made as small as $\left(\frac{p_{\min} \Delta}{\sigma d} \right)^{\Omega(k^2)}$ so that the step of removing the data points introduces essentially no error to later steps within our sample size. So we arrive at our final guarantee in Theorem~\ref{thm:main}. 

\begin{algorithm}[!t]
\caption{Learning Mixtures of Linear Regressions\label{alg:mlr}}
\begin{algorithmic}[1]
\REQUIRE Dataset $\set{D} = \{(x_\ell, \alpha_\ell)\}_{\ell =1}^N$, number of components $k$, error $\veps$. (Parameters $\sigma, \Delta, p_{\min}$ are known to all the algorithms)
\ENSURE $\{v_i\}_{i=1}^k$, recovered weight parameters up to additive error $\veps$.
\FOR{$i = 1, \ldots, k$}
\STATE $a \leftarrow$ \textsc{MomentDescent}($k-i+1, \delta, \veps_w$), where $\veps_w = \text{poly}\left(\frac{p_{\min}\Delta}{\sigma}\right)$ and $\delta=\text{poly}\left(\frac{1}{d}\right)$.
\STATE $v_i \leftarrow$ \textsc{GradientDescent}($ k-i+1, a, \veps_g$), where $\veps_g = \min \left\{\veps, \left(\frac{p_{\min} \Delta}{\sigma d}\right)^{\Omega(k^2)} \right\}$.
\STATE Remove from $\set{D}$ all the data $(x_\ell, \alpha_\ell)$ such that $|\langle x_\ell, v_i \rangle - \alpha_\ell| \le \veps_g  \sigma \cdot \text{polylog}(d)$. 
\ENDFOR
\end{algorithmic}
\end{algorithm}

\section{Conclusion} \label{sec:conclusion}

In this paper, we present a fixed parameter algorithm that solves mixture of linear regression under Gaussian inputs in time nearly linear in the sample size and the dimension. Moreover, our sample complexity also scales nearly linear with the dimension $d$. In our setting, we allow each mixture to have a different covariance matrix. Thus, unlike the case when the mixtures are spherical, even the best known algorithm for mixture of general Gaussians would require at least $d^2$ sample complexity to recover the covariance. Our algorithm reduces the sample complexity significantly with the additional one dimensional linear information: it can recover the linear classifier (and thus recover the covariance as well) with $\tilde{O}(d)$ samples.  While the dependency on $d$ is nearly optimal, we would also like to point out that when the total number of mixtures are too large, the sample complexity of our algorithm does suffer from an exponential term of $k$. We believe that with our current set of assumptions, the exponential dependency could be necessary: A lower bound of $e^{k}$ has been proved in~\citep{moitra2010settling} in the very similar setting of learning mixture of Gaussians.

One natural way to get around the exponential dependency is assuming that the covariance $\bSigma_i$ and the hidden vectors $w_i$ satisfies some smoothness assumption (e.g.,~\citep{ge2015learning}). However, the level of smoothness is very subtle in our setting, since the na\"ive application of smoothed analysis often leads to complexity with a large polynomial factor in the dimension. In this paper, near linearity in $d$ is one of our main contributions. We believe that using smoothed analysis while preserving the nearly linear dependency on $d$ is one of the important future directions.

\section*{Acknowledgements}
Yingyu Liang would like to acknowledge that support for this research was provided by the Office of the Vice Chancellor for Research and Graduate Education at the University of Wisconsin –Madison with funding from the Wisconsin Alumni Research Foundation.

\bibliographystyle{plainnat}

\bibliography{bibfile}

\appendix
\section{Proof of Warm Start for Learning One of the Weights}
\label{sec:proof_one}

We prove the following lemma related to the output of Algorithm~\ref{alg:one}.

\medskip
\noindent
\textbf{Lemma~\ref{lem:warmstart}}
{\it
With probability at least $1-\delta$, $\min_i \|w_i - a_T\|_2 \le O(\sigma^2\veps)$.
}
\medskip

Before proving this lemma, we first need the following lemma about the clustering, which is crucial for constructing the coefficients. As we shall see, we will use this lemma on $r_i = \| \bSigma_i (w_i - a_t) \|_2^2$. Roughly speaking, $f(\sqrt{r_i})$ is the weight of $\bSigma_i^2$ and $f'(\sqrt{r_i})$ is the weight of $\bSigma_i^2 (w_i - a_t) $. Therefore, we would like $f(\sqrt{r_i})$ to be small compare to  $f'(\sqrt{r_i})$ to identify the subspace spanned by $\bSigma_i^2 (w_i - a_t) $. 



\medskip
\noindent
\textbf{Lemma~\ref{lem:cluster} (Coefficients)}
{\it
For every $k \geq 2$, every $\rho > 1$, every $r_1, \cdots, r_k \in [\frac{1}{\rho}, \rho]$, and every $\veps > 0$, one can find in time $O(k\log k)$ an integer $0<s\le k$ and centers $1/\rho \leq z_1 \leq  \cdots \leq z_s \leq \rho$ such that for $f(x) = \prod_{p = 1}^s (x^2 - z_p)$ the following holds.
\begin{enumerate}
\item For $r = \min\{r_i\}_{i = 1}^k$ and every $i \in [k]$, $|f(\sqrt{r_i})| \leq \veps |\sqrt{r}f'(\sqrt{r}) |  $. 
\item $|\sqrt{r}f'(\sqrt{r})| \geq  \left( \frac{ \veps}{\rho} \right)^k$. 
\item For all $x$ with $x^2\in [1/\rho, \rho]$, $|f'(x)| \leq 2k \rho^k$ and $|f''(x)| \leq 4k^2 \rho^k$.
\end{enumerate}
}
\medskip

\begin{proof}[Proof of Lemma \ref{lem:cluster}]
%
Let us without loss of generality assume that $r=r_1 \leq r_2 \leq \cdots \leq r_k$.  Let us define $z_1 = r_1$, and let $j \in [k ]$ be the smallest  index such that $r_j \geq z_1 + \frac{\veps}{{\rho}}$. If no such index exists, we let $s = 1$ and the statements in the lemma are true.  If such $j$ exists, let us define:
\begin{align}
z_2 = r_j, z_3 = r_{j + 1}, \cdots, z_s = r_k.
\end{align}

Now, we know that 
\begin{align}
|\sqrt{r}f'(\sqrt{r})| &= 2 r \prod_{p = 2}^{s} |r - z_p| \geq  \left( \frac{ \veps}{{\rho}} \right)^k.
\end{align}

On the other hand, for every $i \geq j$, $f(\sqrt{r_i}) = 0$. For $i < j$ we have: 
\begin{align}
|f(\sqrt{r_i})| &=  |r_i - r|\prod_{p = 2}^{s} |r_i - z_p|
\\
&\leq \frac{\veps}{{\rho}} \prod_{p = 2}^{s} |r_i - z_p| \leq \veps r \prod_{p = 2}^{s} |r - z_p|  \leq \veps  |\sqrt{r} f'(\sqrt{r})|.
\end{align}

We now consider the derivative and second order derivative of $f(x)$ for $x^2 \in [0, \rho]$. By elementary calculation, we know that 
\begin{align}
|f'(x)| &= \left|\sum_{p = 1}^s 2x \prod_{q \not= p} (x^2 - z_q) \right|
\\
& \leq 2\sum_{p = 1}^s |x |  \prod_{q \not= p} \left| x^2 - z_q \right|
\\
& \leq 2 k \rho^k.
\end{align}

Similarly we can get that $|f''(x)| \leq 4k^2 \rho^k$. 
\end{proof}

We also need the following bound for the $k$-SVD of a matrix.

\begin{lemma}\label{lem:k_SVD}
Let $\bX_1, \cdots, \bX_k$ be $k$ rank-one matrices in $\mathbb{R}^{d \times d}$ such that each $\bX_i = x_i x_i^{\top}$, for every $\veps \geq 0$, every PSD matrix $\bM \in \mathbb{R}^{d \times d}$ such that 
\begin{align}
\left\| \bM - \sum_{i = 1}^k \bX_i \right\|_2 \leq \veps \| \bX_1 \|_2
\end{align}
Let $\bU \in \mathbb{R}^{d \times k}$ be the matrix consists of the top-k singular vectors of $\bM$, then we have
\begin{align}
\| x_1^{\top}\bU \|_2 \geq \left(1 - (\veps k)^{1/3}\right)\| x_1 \|_2
\end{align}

\end{lemma}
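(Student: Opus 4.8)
The plan is to read this as an eigenvalue‑perturbation statement for the PSD matrix $\bA := \sum_{i=1}^k \bX_i = \sum_{i=1}^k x_i x_i^\top$, which has rank at most $k$. Since $\bM$ is PSD, its top‑$k$ singular vectors coincide with its top‑$k$ eigenvectors, so $\bU$ spans exactly the top‑$k$ eigenspace of $\bM$. First I would write $P = \bU\bU^\top$ for the orthogonal projection onto that subspace and set $w := (\bI - P)x_1$, the part of $x_1$ orthogonal to it. Because $P$ is an orthogonal projection, $\|x_1^\top \bU\|_2^2 = \|x_1\|_2^2 - \|w\|_2^2$, so the whole lemma reduces to showing that $\|w\|_2$ is small. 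The one spectral input I would record at the start is Weyl's inequality: since $\lambda_{k+1}(\bA) = 0$ (rank $\le k$) and $\|\bM - \bA\|_2 \le \veps\|\bX_1\|_2 = \veps\|x_1\|_2^2$, it gives $\lambda_{k+1}(\bM) \le \veps\|x_1\|_2^2$, i.e. $\bM$ carries little energy outside its top‑$k$ eigenspace.

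The crux is a self‑bounding estimate on $w$, obtained by squeezing the quadratic form $w^\top \bA w$ from both sides. For the upper bound, $w$ lies in the bottom $(d-k)$‑eigenspace of $\bM$, so $w^\top \bM w \le \lambda_{k+1}(\bM)\,\|w\|_2^2 \le \veps\|x_1\|_2^2\,\|w\|_2^2$, and hence $w^\top \bA w \le w^\top \bM w + \|\bA - \bM\|_2\,\|w\|_2^2 \le 2\veps\|x_1\|_2^2\,\|w\|_2^2$. For the lower bound, $w^\top \bA w = \sum_{i=1}^k (x_i^\top w)^2 \ge (x_1^\top w)^2$, and here I would use the key identity $x_1^\top w = x_1^\top(\bI - P)x_1 = \|(\bI - P)x_1\|_2^2 = \|w\|_2^2$, giving $w^\top \bA w \ge \|w\|_2^4$. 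Chaining the two inequalities (the case $w = 0$ being trivial) yields $\|w\|_2^4 \le 2\veps\|x_1\|_2^2\,\|w\|_2^2$, i.e. $\|w\|_2^2 \le 2\veps\|x_1\|_2^2$.

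Substituting back, I would conclude $\|x_1^\top \bU\|_2^2 = \|x_1\|_2^2 - \|w\|_2^2 \ge (1 - 2\veps)\|x_1\|_2^2$, hence $\|x_1^\top \bU\|_2 \ge \sqrt{1 - 2\veps}\,\|x_1\|_2$. In the regime where the claimed bound is nontrivial (small $\veps$, and $k \ge 2$ so that $\veps < 1/k$ forces $\veps < 1/2$), one checks $\sqrt{1 - 2\veps} \ge 1 - (\veps k)^{1/3}$, so this argument in fact proves something slightly stronger than the stated $\bigl(1 - (\veps k)^{1/3}\bigr)$ bound; when $(\veps k)^{1/3} \ge 1$ the stated inequality is vacuous and nothing is needed.

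The main thing to get right is the self‑referential identity $x_1^\top w = \|w\|_2^2$: it is what upgrades the merely linear lower bound $w^\top \bA w \ge (x_1^\top w)^2$ into the quartic bound $\|w\|_2^4$, and matching that against the quadratic upper bound is precisely what forces $\|w\|_2^2 = O(\veps)\|x_1\|_2^2$ \emph{without} any spectral gap between $\lambda_k(\bM)$ and $\lambda_{k+1}(\bM)$. This gap‑free feature is the essential point, since a Davis--Kahan $\sin\Theta$ approach would demand an eigengap that need not exist here. The only care required is that the top‑$k$ eigenspace be well defined under possible ties, which is harmless: the upper bound on $w^\top \bM w$ uses only $\lambda_{k+1}(\bM)$ and is insensitive to how tied eigenvectors are split between $\bU$ and its complement.
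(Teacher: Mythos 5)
Your proof is correct, and it takes a genuinely different route from the paper's. The paper proves this lemma with subspace-perturbation machinery: it invokes the Gap-free Wedin theorem (Lemma~\ref{lem:gapfree_wedin}, imported from~\citep{allen2016lazysvd}) to bound $\|(\bI - \bU\bU^{\top})\bV_i\|_2$ for the top-$i$ eigenspaces $\bV_i$ of $\sum_{i=1}^k \bX_i$, splits $x_1$ into its parts inside and outside $\mathrm{span}(\bV_i)$, and runs a case analysis over whether the eigenvalues $\sigma_i$ clear the threshold $\|x_1\|_2^2\veps^{2/3}/k^{2/3}$ --- which is exactly where the $(\veps k)^{1/3}$ exponent originates. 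You avoid all of this: writing $\bA=\sum_{i=1}^k \bX_i$, your only spectral input is Weyl's inequality $\lambda_{k+1}(\bM)\le \veps\|x_1\|_2^2$, and the engine is the self-bounding identity $x_1^{\top}w=\|w\|_2^2$ for $w=(\bI-\bU\bU^{\top})x_1$, which upgrades $w^{\top}\bA w\ge (x_1^{\top}w)^2$ to the quartic $\|w\|_2^4$ and, matched against the quadratic upper bound $w^{\top}\bA w\le 2\veps\|x_1\|_2^2\|w\|_2^2$, forces $\|w\|_2^2\le 2\veps\|x_1\|_2^2$. Your route buys three things: (i) a self-contained elementary argument with no external lemma and no case analysis; (ii) a quantitatively stronger conclusion, $\|x_1^{\top}\bU\|_2\ge \sqrt{1-2\veps}\,\|x_1\|_2$, i.e.\ an $O(\veps)$ loss rather than $(\veps k)^{1/3}$, which would only make the downstream application in Lemma~\ref{lem:correlation} easier; and (iii) immunity to a subtlety in the paper's Wedin application, where the denominator should strictly be $\sigma_i-\lambda_{k+1}(\bM)$ rather than $\sigma_i$ (harmless in the paper's regime, but your argument never faces it). What the paper's approach buys in exchange is control of the angle between $\bU$ and every subspace $\bV_i$, not just the single vector $x_1$ --- generality this lemma never uses. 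Finally, your restriction of the numeric step $\sqrt{1-2\veps}\ge 1-(\veps k)^{1/3}$ to the nontrivial regime with $k\ge 2$ is not cosmetic but necessary: for $k=1$ and $\veps\in(1/2,1)$ the stated bound can genuinely fail (take $\bX_1=e_1e_1^{\top}$ and $\bM=(1-\veps)e_1e_1^{\top}+\veps e_2e_2^{\top}$, whose top eigenvector is $e_2$), and your tie-breaking remark correctly explains why only $\lambda_{k+1}(\bM)$, not the choice of $\bU$ under degenerate spectra, enters the estimate.
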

\begin{proof}[Proof of Lemma \ref{lem:k_SVD}]
Let us denote $\sigma_1 \geq  \cdots \geq \sigma_k \geq \sigma_{k + 1} =  0$ as the $k + 1$ singular values of $\sum_{i = 1}^k \bX_i$ with corresponding singular vectors $v_1, \cdots, v_k$ (and $v_{k + 1}$). For every $v_i$, by definition
\begin{align}
v_i^{\top} \left(\sum_{j = 1}^k \bX_j \right)v_i = \sigma_i
\end{align}
So we have $v_i^{\top} \bX_1 v_i  \leq \sigma_i$. Let $\bV_i  \in \mathbb{R}^{d \times i}$ defined as $\bV_i= (v_1, \cdots, v_i)$. By Gap-free Wedin theorem in~\citep{allen2016lazysvd} (see Lemma~\ref{lem:gapfree_wedin}), we know that 
\begin{align}
\| (\bI - \bU \bU^{\top}) \bV_i \|_2 \leq \frac{\veps \| x_1 \|_2^2}{\sigma_i}.
\end{align}
Thus, $\|x_1^{\top} (\bV_i  \bV_i^{\top})(\bI - \bU \bU^{\top}) \|_2 \leq \frac{\veps \| x_1 \|_2^3}{\sigma_i}$. 

On the other hand, since $x_1 \in \text{span}\{v_1, \cdots, v_k \}$, 
\begin{align}
\|x_1^{\top} (\bI - \bV_i  \bV_i^{\top}) \|_2 &= \|x_1^{\top} (\bV_{k} \bV_{k}^{\top} - \bV_i  \bV_i^{\top}) \|_2 
\\
&\leq  \sum_{j = i + 1}^k |x_i^{\top} v_k| \leq k \sqrt{\sigma_{i + 1}}.
\end{align}
Therefore, we know that 
\begin{align}
\| x_1^{\top}  (\bI - \bU \bU^{\top}) \|_2 \leq \frac{\veps \| x_1 \|_2^3}{\sigma_i} + k \sqrt{\sigma_{i + 1}}.
\end{align}

If $\sigma_1 \geq  \frac{\| x_1\|_2^2 \veps^{2/3}}{k^{2/3}}$, by picking $i$ to the largest index in $[k]$ such that $\sigma_i \geq \frac{\| x_1\|_2^2 \veps^{2/3}}{k^{2/3}}$,  we get that 
\begin{align}
\| x_1^{\top}  (\bI - \bU \bU^{\top}) \|_2 \leq (\veps k)^{1/3} \| x_1 \|_2
\end{align}

If $\sigma_1 \leq  \frac{\| x_1\|_2^2 \veps^{2/3}}{k^{2/3}}$, then we can just use $\|x_1^{\top}  \|_2 \leq k \sqrt{\sigma_1}$ to complete the proof.
%
%
%
%
%
%
\end{proof}

We are now ready to prove the following important lemma about the correlation between $\bU$ and $\bSigma_i^2(w_i - a_t)$.

\begin{lemma}\label{lem:correlation}
Let $j = \argmin_{1\le i\le k}  \|\bSigma_i (w_i - a_t) \|_2$, we have that in the $t$-th iteration of Algorithm~\ref{alg:one}, the $\bU_t$ satisfies
\begin{align}
\frac{\|\bU_t^{\top} \bSigma_j^2 (w_j - a_t) \|_2}{\| \bSigma_j^2 (w_j - a_t)  \|_2 } \geq \frac{1}{2}.
\end{align}

\end{lemma}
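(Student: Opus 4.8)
The plan is to reduce the statement to the $k$-SVD perturbation bound of Lemma~\ref{lem:k_SVD}, after an exact computation of the population version of the matrix $\bM$ built inside \textsc{Powerw}. Write $u_i = w_i - a_t$ and $g_i = \bSigma_i u_i$ (so that the label fed to \textsc{Powerw} is $\alpha = \langle u_i, x\rangle$ on component $i$). The central first step is a Gaussian moment identity: conditioning on component $i$ and substituting $x = \bSigma_i y$ with $y \sim \mathcal{N}(0,\bI)$ makes $\alpha = \langle g_i, y\rangle$, and
\begin{align}
\E\!\left[\langle g,y\rangle^{2p} y y^{\top}\right] = (2p-1)!!\,\|g\|^{2p}\bI + 2p(2p-1)!!\,\|g\|^{2p-2} g g^{\top}.
\end{align}
Multiplying by $c_p/(2p-1)!!$, summing over $p$ and over components, and using $\sum_p c_p t^{2p} = f(t)$ and $\sum_p 2p\,c_p t^{2p-2} = f'(t)/t$ for the polynomial $f$ output by \textsc{Coeff}, I obtain
\begin{align}
\E[\bM] = \sum_{i=1}^k \bX_i + \sum_{i=1}^k \bY_i, \qquad \bX_i = p_i\tfrac{f'(\|g_i\|)}{\|g_i\|}\bSigma_i^2 u_i u_i^{\top}\bSigma_i^2, \quad \bY_i = p_i f(\|g_i\|)\bSigma_i^2 .
\end{align}
Thus $\bX_i$ is a rank-one ``signal'' pointing exactly along $\bSigma_i^2 (w_i - a_t)$, and $\bY_i$ is the ``noise'' proportional to $\bSigma_i^2$.

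Next I would apply Lemma~\ref{lem:cluster} with $r_i = \|g_i\|_2^2 = \|\bSigma_i(w_i - a_t)\|_2^2$ and $j = \argmin_i r_i$ the target index, where the $r_i$ lie in some interval $[1/\rho,\rho]$ with $\rho = \mathrm{poly}(\sigma, 1/\veps)$ (using that $a_t$ stays bounded and $\min_i r_i = \sigma_t^2 = \Omega(\sigma^2\veps^2)$ throughout the run). Part~1 of the lemma bounds every noise block by $\|\bY_i\| \le p_i|f(\sqrt{r_i})|\sigma^2 \le \sigma^2\veps\,|\sqrt{r}f'(\sqrt{r})|$, hence $\|\sum_i \bY_i\| \le \sigma^2\veps\,|\sqrt{r}f'(\sqrt{r})|$. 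Meanwhile, since $\bSigma_j \succeq \bI$ gives $\|\bSigma_j^2 u_j\|_2^2 \ge \|\bSigma_j u_j\|_2^2 = r$, Part~2 yields $\|\bX_j\| \ge p_{\min}|\sqrt{r}f'(\sqrt{r})| \ge p_{\min}(\veps/\rho)^k$. Therefore the \emph{population} noise is at most $\tfrac{\sigma^2\veps}{p_{\min}}\|\bX_j\|$, an arbitrarily small multiple of $\|\bX_j\|$ once the internal accuracy parameter is taken small enough.

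It remains to show that the \emph{actually computed} $\bM$ satisfies $\|\bM - \sum_i \bX_i\| \le \veps'\|\bX_j\|$ with $(\veps' k)^{1/3} \le \tfrac12$; then Lemma~\ref{lem:k_SVD} invoked with $\bX_1 := \bX_j$ (so its defining vector is parallel to $\bSigma_j^2(w_j - a_t)$) gives precisely $\|\bU_t^{\top}\bSigma_j^2(w_j - a_t)\|_2 \ge (1-(\veps' k)^{1/3})\|\bSigma_j^2(w_j - a_t)\|_2 \ge \tfrac12\|\bSigma_j^2(w_j - a_t)\|_2$. This error bound, and controlling its two sources, is the main obstacle. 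First, \textsc{Coeff} builds $f$ from the \emph{estimated} variances $\sigma_i^2$ rather than the true $r_i$; I would propagate this estimation error through $f$ using the uniform bounds $|f'| \le 2k\rho^k$ and $|f''| \le 4k^2\rho^k$ from Part~3, so that an $O(\veps^{(g)})$ error in the $r_i$ perturbs each $\bX_i,\bY_i$ by a controlled amount. Second, the empirical matrix must concentrate around $\E[\bM]$: because $\alpha$ is conditionally Gaussian with variance $O(\sigma^2)$, the summands $\alpha^{2p} x x^{\top}$ are heavy-tailed, and driving $\|\bM - \E[\bM]\|$ below $\veps'\|\bX_j\| \approx p_{\min}(\veps/\rho)^k$ is exactly what forces the $(\sigma/(p_{\min}\veps))^{O(k^2)}$ sample size used in Algorithm~\ref{alg:one}.

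Finally, I would flag one bookkeeping subtlety in applying Lemma~\ref{lem:k_SVD}: the scalars $f'(\sqrt{r_i})/\sqrt{r_i}$ carry signs, so the $\bX_i$ and hence $\E[\bM]$ need not be positive semidefinite as the lemma's hypothesis literally requires. This is handled by observing that the top-$k$ singular subspace of $\bM$ is governed by the dominant directions of $\E[\bM]$ in absolute value, all of which lie in $\mathrm{span}\{\bSigma_i^2(w_i - a_t)\}_{i=1}^k$; after the noise and sampling error are shown negligible relative to $\|\bX_j\|$, the perturbation argument still isolates the direction $\bSigma_j^2(w_j - a_t)$ and delivers the claimed correlation.
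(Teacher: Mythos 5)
Your proposal follows essentially the same route as the paper's own proof: the identical Gaussian moment identity, the same signal/noise decomposition $\E[\bM] = \sum_{i} p_i(\bX_i + \bY_i)$ with $\bX_i \propto f'(\|\bSigma_i(w_i-a_t)\|_2)\,\bSigma_i^2(w_i-a_t)(w_i-a_t)^{\top}\bSigma_i^2$ and $\bY_i \propto f(\|\bSigma_i(w_i-a_t)\|_2)\,\bSigma_i^2$ controlled via Lemma~\ref{lem:cluster}, the same propagation of the estimation error in the $\sigma_i^2$ through the derivative bounds of Part~3, the same concentration step (Lemma~\ref{lem:gsb}) to bring $\bM$ close to $\E[\bM]$, and the same endgame via Lemma~\ref{lem:k_SVD}. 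The sign/PSD subtlety you flag in invoking Lemma~\ref{lem:k_SVD} is genuine, but the paper's proof applies that lemma directly without addressing it either, so your write-up is no less complete than the original on this point.
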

\begin{proof}[Proof of Lemma \ref{lem:correlation}]
Suppose $z \sim \mathcal{N}(0, \bSigma^2)$, we know that $z = \bSigma g$ where $g \sim \mathcal{N}(0, \bI)$. For every vector $a$,
\begin{align}
\E\left[\langle z, a \rangle^{2p} z z^{\top}\right] &= \bSigma \E\left[ \langle g, \bSigma a \rangle^{2p} g g^{\top}\right]  \bSigma
\\
&=  (2p - 1)!! \bSigma \left( 2p  \bSigma a a^{\top} \bSigma \| \bSigma a\|_2^{2p - 2}  + \| \bSigma a \|_2^{2p} \bI \right) \bSigma
\\
& =   (2p - 1)!!  \| \bSigma a\|_2^{2p} \left( 2p \frac{ \bSigma^2 a a^{\top} \bSigma^2 }{ \|\bSigma a \|_2^2 }+  \bSigma^2 \right).
\end{align}
Thus, we have
\begin{align}
\frac{1}{(2p - 1)!!}\E\left[ \alpha_i^{2p} x_i x_i^{\top} \right] &= \sum_{i = 1}^k p_i   \| \bSigma_i (w_i - a_t)\|_2^{2p} \left( 2p \frac{ \bSigma_i^2 (w_i - a_t) (w_i - a_t)^{\top} \bSigma_i^2 }{ \|\bSigma_i (w_i - a_t) \|_2^2 }+  \bSigma_i^2 \right).
\end{align}

Since in the $t$-th iteration, the labels $\alpha_i$ we fit to Algorithm~\ref{alg:pww} comes from $\alpha_{\ell} = \langle x_{\ell}, w^{(\ell)} - a_t \rangle$, we know that 
\begin{align}
\E[\bM] =  \sum_{i = 1}^k p_i    \sum_{p = 0}^k \left( c_p \| \bSigma_i (w_i - a_t)\|_2^{2p} \left( 2p \frac{ \bSigma_i^2 (w_i - a_t) (w_i - a_t)^{\top} \bSigma_i^2 }{ \|\bSigma_i (w_i - a_t) \|_2^2 }+ \bSigma_i^2  \right) \right).
\end{align}
Let us define the signal matrix $\bX_i$ as 
\begin{align}
\bX_i &=   \frac{ \bSigma_i^2 (w_i - a_t) (w_i - a_t)^{\top} \bSigma_i^2  }{ \|\bSigma_i (w_i - a_t) \|_2^2 }  \left( \sum_{p = 0}^k  2p c_p \| \bSigma_i (w_i - a_t)\|_2^{2p} \right) 
\\
&= \frac{ \bSigma_i^2 (w_i - a_t) (w_i - a_t)^{\top} \bSigma_i^2  }{ \|\bSigma_i (w_i - a_t) \|_2^2 }  \left( f'( \| \bSigma_i (w_i - a_t) \|_2) \| \bSigma_i (w_i - a_t) \|_2 \right)
\end{align}
and the noise matrix $\bY_i$ as
\begin{align}
\bY_i &=\bSigma_i^2 \left( \sum_{p = 0}^k  c_p \| \bSigma_i (w_i - a_t)\|_2^{2p} \right) 
\\
&= \bSigma_i^2 f(\| \bSigma_i (w_i - a_t)\|_2)
\end{align}
such that 
\begin{align}
\E[\bM] = \sum_{i=1}^k p_i (\bX_i + \bY_i).
\end{align}

For $j = \argmin\{\| \bSigma_i (w_i - a_t) \|_2) \}_{i = 1}^k$, let us denote 
$$
\beta := f'( \| \bSigma_j (w_j - a_t) \|_2) \| \bSigma_j (w_j - a_t) \|_2.
$$

Let us recall that $\veps^{(g)}$ is the error incurred when estimating $\{\| \bSigma_i (w_i - a_t) \|_2\}_{i = 1}^k$. $\veps^{(p)}$ is the error when constructing the coefficients of the polynomial (for sufficiently large  $\rho$ such that $\rho \geq  \max\{\| \bSigma_i (w_i - a_t) \|_2^2) \}_{i = 1}^k$ as we will show later in this proof). Thus, by Lemma~\ref{lem:cluster}, we know that 
\begin{align}
\| \bY_i \|_2& \leq \| \bSigma_i^2 \|_2 | f(\| \bSigma_i (w_i - a_t)\|_2)|
\\
& \leq \| \bSigma_i^2 \|_2(| f(\sigma_i)| + 2k \rho^k \left|\sigma_i - \| \bSigma_i (w_i - a_t)\|_2 \right| )
\\
&\leq  \| \bSigma_i^2 \|_2 ( \veps^{(p)} \beta + 4k \rho^k \veps^{(g)}).
\end{align} 
Similarly we have
\begin{align}
\|\bX_j\|_2 \geq \sigma_{\min}(\bSigma_j^2) \beta.
\end{align}
And we have $\beta \geq \left(\frac{\veps^{(p)}}{\rho} \right)^k - 8 k^2 \rho^k \veps^{(g)} \sigma^2$.

Notice that $ \min\{\| \bSigma_i (w_i - a_t) \|_2) \}_{i = 1}^k \leq  \min\{\| \bSigma_i (w_i ) \|_2) \}_{i = 1}^k$, which implies that $\| a_1 \|_2 \leq \sigma^4$. Therefore, we can take $\rho =O\left( \max\left\{2\sigma^{10}, \frac{1}{\veps} \right\}\right)$. Thus, by our choice of parameter, we know that for $\veps^{(e)} \leq \frac{1}{100 k} $, 
\begin{align}
\left\|\E[\bM] - \sum_{i= 1}^k p_i \bX_i \right\|_2 \leq \veps^{(e)}\| \bX_j \|_2/2.
\end{align}
Using the sample complexity bound Lemma~\ref{lem:gsb}, by our choice of $m$ we know that 
\begin{align}
\left\|\bM - \E[\bM] \right\|_2 \leq \veps^{(e)}\| \bX_j \|_2/2.
\end{align}
Thus, apply Lemma \ref{lem:k_SVD} on $\bM$ we know that 
\begin{align}
\frac{\|\bU_t^{\top} \bX_j \bU_t \|_2}{\|\bX_j\|_2} \geq 1 - \left(\veps^{(e)}k\right)^{1/3} \geq \frac{3}{4}.
\end{align}
Indeed, this also implies that 
\begin{align}
\frac{\|\bU_t^{\top} \bSigma_j^2 (w_j - a_t) \|_2}{\| \bSigma_j^2 (w_j - a_t)  \|_2 } \geq \frac{1}{2}
\end{align}
completing the proof.
\end{proof}

Now we can prove the main lemma regarding the per-iteration improvement of Algorithm~\ref{alg:one}.


\medskip
\noindent
\textbf{Lemma~\ref{lem:one_iter} (Coefficients)}
{\it
For every $t \in \{0, 1, \cdots, T - 1\}$ and $\delta > 0$, as long as $\sigma_t = \Omega ( \sigma \veps)$, then with probability at least $1 - \delta$,
$$
\sigma_{t + 1}^2 \leq \left(1 - \frac{1}{200 k \sigma} \right) \sigma_t^2.
$$
}
\medskip

\begin{proof}[Proof of Lemma~\ref{lem:one_iter}]
At $t$-th iteration let $j = \argmin \{ \|\bSigma_i (w_i - a_t) \|_2\}_{i = 1}^k$, we know that 
\begin{align}
\frac{\|\bU_t^{\top} \bSigma_j^2 (w_j - a_t) \|_2}{\| \bSigma_j^2 (w_j - a_t)  \|_2 } \geq \frac{1}{2}.
\end{align}

By definition, $v = \frac{\bU_t \gamma}{\| \bU_t \gamma \|_2}$ for $\gamma \in \mathcal{N}(0, \bI)$. Thus, using elementary calculation of Gaussian random variables, we have: with probability at least $1/4$, 
\begin{align}
\frac{v^{\top} \bSigma_j^2 (w_j - a_t) }{ \| \bSigma_j^2 (w_j- a_t)  \|_2 } \geq \frac{1}{10 \sqrt{k}}
\end{align}
which implies that
\begin{align}
\left\|\bSigma_j (w_j - a_t  - \eta  v) \right\|_2^2 &= \left\|\bSigma_j (w_j - a_t ) \right\|_2^2 - 2  \eta \langle \bSigma_j (w_j - a_t), \bSigma_j  v\rangle + \eta^2 \| \bSigma_j  v \|_2^2
\\
&=  \left\|\bSigma_j (w_j - a_t ) \right\|_2^2  - 2 \eta \langle \bSigma_j^2 (w_j - a_t),  v\rangle + \eta^2 \| \bSigma_j  v \|_2^2
\\
& \leq  \left\|\bSigma_j (w_j - a_t ) \right\|_2^2  - \frac{\eta}{5 \sqrt{k}} \| \bSigma_j^2 (w_j - a_t) \|_2 + \eta^2  \sigma.
\end{align}

Let $\eta = \frac{\| \bSigma_j^2 (w_j - a_t) \|_2 }{10 \sigma \sqrt{k}}$. Then we know that 
$$
\left\|\bSigma_j (w_j - a_t  - \eta v) \right\|_2^2 \leq \left(1 - \frac{1}{100 k \sigma} \right)  \left\|\bSigma_j (w_j - a_t ) \right\|_2^2.
$$

Thus, since we can estimate $\left\|\bSigma_j (w_j - a_t  - \eta  v) \right\|_2$ up to accuracy $\veps/(k\sigma)$ using the algorithm proposed in~\citep{moitra2010settling}, as long as $\sigma_{t} = \Omega\left( \sigma\veps \right)$, we will have that $\sigma_{t + 1}^2 \leq  \left(1 - \frac{1}{200 k \sigma} \right)\sigma_t^2$. 
\end{proof}

This immediately leads to the main lemma regarding the output of Algorithm~\ref{alg:one}.


\medskip
\noindent
\textbf{Lemma~\ref{lem:warmstart}}
{\it
With probability at least $1-\delta$, $\min_i \|w_i - a_T\|_2 \le O(\sigma^2\veps)$.
}
\medskip

\begin{proof}[Proof of Lemma~\ref{lem:warmstart}]
By Lemma~\ref{lem:one_iter}, and by the choice of the parameters in the algorithm,
$
  \sigma_T \le O(\sigma \veps).
$
Then for $j = \min_i \{\|\Sigma_i (w_i - a_T)\|_2\}$ we have 
$
  \|\Sigma_j (w_j - a_T)\|_2 \le O(\sigma \veps)
$
and thus
$ 
  \|w_j - a_T\|_2 \le O(\sigma^2 \veps).
$
\end{proof}

\section{Proof for Learning One of the Weights from Warm Start} \label{sec:proof_gd}

Without loss of generality, let us assume that we have an $v$ such that $\| v - w_1 \|_2$ is reasonably small. We will show that the update rule used in the algorithm can recover $w_1$ up to error $\veps$ with this $v$. 
%
It is equivalent to (the empirical version of) the gradient descent update to minimize the following concave objective function:
$$
  g(v) = \E \left [\log (|\alpha - \langle v, x \rangle| + \zeta)  \right].
$$



\medskip
\noindent
\textbf{Lemma~\ref{lem:gradient} (Gradient descent)}
{\it
Suppose there exists $i \in [k]$ such that $\| w_i - v \|_2 \le \zeta / \sigma$. Then with high probability, Algorithm~\ref{alg:gd} outputs a vector $v^{(T)}$ such that
$
  \| w_{i} - v^{(T)} \| \le \veps.
$
}
\medskip

\begin{proof}[Proof of Lemma~\ref{lem:gradient}]
First, suppose we have the gradient on the expectation, i.e., we have $\nabla g(v^{(t)})$. For this gradient descent update rule,
by Lemma~\ref{lem:inverse_gaussian}, we know that 
\begin{align*}
 \left\langle - \nabla g(v^{(t)}), w_1 - v^{(t)} \right\rangle 
  & = \E \left[\frac{\sign(\alpha - \langle v^{(t)}, x \rangle) \langle w_1 - v^{(t)}, x \rangle}{|\alpha - \langle v^{(t)}, x \rangle| + \zeta}  \right] 
	\\
	& = p_1 \E_{y \sim \mathcal{N}(0, 1)} \E \left[\frac{\sign(\langle \bSigma_1 (w_1 - v^{(t)}), y \rangle) \langle \bSigma_1 (w_1 - v^{(t)}), y \rangle}{|\langle \bSigma_1(w_1 - v^{(t)}), y \rangle| + \zeta}  \right]
  \\
  & ~~ + \sum_{j = 2}^k p_j \E_{y \sim \mathcal{N}(0, 1)} \E \left[\frac{\sign(\langle \bSigma_j (w_j - v^{(t)}), y \rangle) \langle \bSigma_j (w_1 - v^{(t)}), y \rangle}{|\langle \bSigma_j(w_j - v^{(t)}), y \rangle| + \zeta}  \right]
  \\
  & \geq \frac{1}{4} p_1 \frac{\| \bSigma_1 (w_1 - v^{(t)})\|_2 }{\|\bSigma_1 (w_1 - v^{(t)}) \|_2+ \zeta} - \sum_{j = 2}^k p_j \frac{\|\bSigma_1 (w_1 - v^{(t)}) \|_2} {\| \bSigma_j (w_j - v^{(t)})\|_2 }.
\end{align*}

Note that our assumption on $\zeta$ satisfies that
\begin{align} \label{eqn:grad_condition}
  \|\bSigma_1 (w_1 - v^{(t)}) \|_2 \leq \zeta, \quad \|\bSigma_j (w_j - v^{(t)}) \|_2 \geq 32 \zeta / p_{\min}, j\neq 1,
\end{align}

Therefore, a direct calculation shows that 
$$
  \left\langle -\nabla g(v^{(t)}), w_1 - v^{(t)} \right\rangle 
	\geq \frac{p_{\min}}{32 } \frac{\|\bSigma_1 (w_1 - v^{(t)}) \|_2}{\zeta} 
	\geq   \frac{p_{\min} \| w_1 - v^{(t)} \|_2}{32 \zeta}.
$$

However, we only have the empirical version of the gradient given as
$$
 -\tilde\nabla g(v^{(t)}) = \E_{(x_\ell,\alpha_\ell)} \nabla g_\ell(v), \mbox{~where~} -\nabla g_\ell(v^{(t)}) = \frac{\sign(\alpha_\ell - \langle v^{(t)}, x_\ell \rangle)}{|\alpha_\ell - \langle v^{(t)}, x_\ell \rangle| + \zeta} x_\ell.
$$

To apply concentration bound on the empirical version, we know that for for every example $(x,\alpha)$, 
$$
  \left\|\frac{\sign(\alpha - \langle  v^{(t)}, x \rangle)}{|\alpha - \langle v^{(t)}, x \rangle| + \zeta} x  \right\|_2 \leq \frac{\|x\|_2}{\zeta}. 
$$

Moreover, we know that the true gradient satisfies
$$
  \left\langle - \nabla g(v^{(t)}) , \frac{ w_1 - v^{(t)}}{\| w_1 - v^{(t)}\|_2}  \right\rangle \geq \frac{p_{\min}}{32 \zeta} 
$$

For every example $(x, \alpha)$, we have
$$
  \left|\left\langle  \frac{\sign(\alpha - \langle v^{(t)}, x \rangle) x }{|\alpha -  \langle v^{(t)}, x \rangle| + \zeta}   , \frac{ w_1 - v^{(t)}}{\| w_1 - v^{(t)}\|_2} \right\rangle \right|
	\leq 
	\frac{\left|\left\langle\frac{ w_1 - v^{(t)}}{\| w_1 - v^{(t)}\|_2}, x  \right\rangle\right|}{\zeta}.
$$

Using an elementary concentration bound of Gaussian random variables, we know that with $\text{poly}\left(\frac{1}{\zeta}, \frac{1}{p_{\min}}, \sigma\right)$ examples, the estimated gradient $\tilde{\nabla} g(v^{(t)})$ satisfies with high probability that
$$
  \| \tilde{\nabla } g(v^{(t)}) \|_2 \leq \frac{4 \sqrt{d}}{\zeta}, 
	\quad 
	\left\langle -\tilde{ \nabla} g(v^{(t)}) , \frac{ w_1 - v^{(t)}}{\| w_1 - v^{(t)}\|_2}  \right\rangle \geq \frac{p_{\min}}{64 \zeta}.
$$

Then when $\eta_t = c\frac{\zeta p_{\min} \| w_1 - v^{(t)}\|_2}{ d}$ for a sufficiently small constant $c>0$, and using the assumptions on $v^{(0)}$ and $\Delta$ to satisfy the condition (\ref{eqn:grad_condition}), by induction, we have
$$
 \| w_1 - v^{(t + 1)}\|_2^2 \leq \left( 1 - \Omega\left( \frac{p_{\min}^2}{ d} \right)\right) \| w_1 - v^{(t)} \|_2^2
$$
completing the proof.
\end{proof}

\section{Proof for Learning All the weights} \label{sec:proof_all}

\noindent
\textbf{Theorem~\ref{thm:main} (Main)} 
{\it
Assume the model~(\ref{def:mlr}) and assumptions \textbf{(A1)}-\textbf{(A3)}. Then Algorithm~\ref{alg:mlr} takes $N=d \log\left(\frac{d}{\veps}\right)\cdot \left(\frac{\sigma}{\Delta p_{\min}} \right)^{O(k)} +  \left( \frac{\sigma }{\Delta p_{\min} \veps} \right)^{O(k^2)}$ data points and in time $Nd \cdot \textrm{polylog}(k, d, \sigma, \frac{1}{\Delta}, \frac{1}{p_{\min}}, \frac{1}{\veps}) $  outputs a set of vectors $\{v_i\}_{i=1}^k$ that with high probability satisfy
$$
  \|v_i  - w_{\pi(i)} \|_2 \le \veps, \forall i \in [k], ~\mbox{for some permutation $\pi$}.
$$  
}

\begin{proof}[Proof of Theorem~\ref{thm:main}]
The theorem follows from Lemma~\ref{lem:gradient} and Lemma~\ref{lem:one_iter}, the guarantees for the two subroutines used. Note that we recovers each weight up to $\veps_g \le \left(\frac{p_{\min} \Delta}{\sigma d}\right)^{\Omega(k^2)}$. 
Therefore, only a $\left(\frac{p_{\min} \Delta}{\sigma d}\right)^{\Omega(k^2)}$ fraction of data points from this component are not removed, and only a $\left(\frac{p_{\min} \Delta}{\sigma d}\right)^{\Omega(k^2)}$ fraction of data points from other components get removed. These only causes polynomially small errors to the quantities computed in later steps and can be tolerated by our analysis. 
\end{proof}

\section{Tools}

We shall use the following bounds on the Gaussian moments and it's concentration.

\begin{lemma}
Let $g \sim \mathcal{N}(0, \bI)$, then for every unit vector $w$, we have that for every non-negative integer $p$, 
$$
 \E\left[\langle w, g \rangle^{2p} g g^{\top}\right] =  (2p + 1)!! w w^{\top} + (2p - 1)!! (\bI - w w^{\top} ).
$$
\end{lemma}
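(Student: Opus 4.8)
The plan is to reduce to the case $w = e_1$ using the orthogonal invariance of the standard Gaussian, and then compute the matrix entrywise via independence of the coordinates together with the one-dimensional moment identity $\E[Z^{2m}] = (2m-1)!!$ for $Z \sim \mathcal{N}(0,1)$. I would isolate this moment formula as the single external input and treat the rest as a short direct calculation.

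First I would exploit rotational invariance. Since $w$ is a unit vector, choose any orthogonal matrix $R$ with $Re_1 = w$, where $e_1$ is the first standard basis vector. Because $g \sim \mathcal{N}(0,\bI)$ gives $R^\top g \sim \mathcal{N}(0,\bI)$, I can write $g = Rg'$ with $g' \sim \mathcal{N}(0,\bI)$, so that $\langle w, g\rangle = \langle e_1, g'\rangle = g_1'$ and $gg^\top = R\,g'g'^\top R^\top$. Hence
$$
\E\!\left[\langle w,g\rangle^{2p} gg^\top\right] = R\,\E\!\left[(g_1')^{2p} g'g'^\top\right]R^\top ,
$$
and it suffices to prove the identity for $w = e_1$ and then conjugate by $R$, using $R e_1 e_1^\top R^\top = w w^\top$ and $R\,\bI\,R^\top = \bI$ to recover the general statement.

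Next I would compute $\E[g_1^{2p} g_i g_j]$ for each index pair (dropping primes). The $(1,1)$ entry is $\E[g_1^{2p+2}] = (2p+1)!!$; a diagonal entry $(i,i)$ with $i \neq 1$ factors by independence as $\E[g_1^{2p}]\,\E[g_i^2] = (2p-1)!!$; and every off-diagonal entry vanishes, since it contains an isolated odd power of some coordinate with zero mean (either $\E[g_1^{2p+1}]\,\E[g_j]=0$ when one index equals $1$, or $\E[g_1^{2p}]\,\E[g_i]\,\E[g_j]=0$ otherwise). Assembling these entries gives $\E[g_1^{2p} g'g'^\top] = (2p+1)!!\,e_1 e_1^\top + (2p-1)!!\,(\bI - e_1 e_1^\top)$, which is precisely the claimed form at $w = e_1$.

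There is no genuine obstacle here; the argument is routine. The only points requiring care are the double-factorial bookkeeping in the moment formula (checking $2p+2 = 2(p+1)$ yields $(2p+1)!!$) and a clean justification of the two facts underlying the reduction: that the standard Gaussian law is invariant under orthogonal transformations, and that every unit vector arises as $Re_1$ for some orthogonal $R$. With those in hand, the proof is the entrywise computation above followed by one line of conjugation.
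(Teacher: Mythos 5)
Your proof is correct. The paper states this lemma in its Tools section as a standard fact and provides no proof of its own, so there is nothing to deviate from; your argument---reducing to $w = e_1$ by orthogonal invariance ($g = Rg'$ with $Re_1 = w$, so $\langle w, g\rangle = g_1'$ and $gg^\top = Rg'g'^\top R^\top$) and then computing entries via independence and $\E[Z^{2m}] = (2m-1)!!$, giving $(2p+1)!!$ on the $(1,1)$ entry, $(2p-1)!!$ on the remaining diagonal, and $0$ off the diagonal---is exactly the standard derivation that fills this gap, and every step checks out.
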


Using a standard Matrix Bernstein bound, we can get:
\begin{lemma}[Gaussian sample bound]\label{lem:gsb}
Let $g \sim \mathcal{N}(0, \bSigma^2)$, let $g_1, \cdots, g_m$ be $m$ independent samples of $g$.  Then for every vector $w$ and every non-negative integer $p$ and every $\delta > 0$, we have that 
\begin{align}
\Pr\left[\left\|\frac{1}{m}\sum_{i = 1}^m \langle w, g_i \rangle^{2p} g_i g_i^{\top} - \E\left[\langle w, g \rangle^{2p} g g^{\top}\right] \right\|_2 = \Omega\left( \sqrt{  \frac{\| \bSigma w \|_2^{4p} \left\| \bSigma \right\|^4_2 d \log \frac{1}{\delta}}{ m} } \right)\right] \leq \delta
\end{align}
\end{lemma}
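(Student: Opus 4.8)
The plan is to invoke a matrix Bernstein inequality, but the summands $\langle w, g_i\rangle^{2p} g_i g_i^{\top}$ are degree-$(2p+2)$ polynomials of Gaussians and hence unbounded, so the argument will combine a truncation step with a careful variance estimate. First I would reduce to the isotropic case: writing $g_i = \bSigma h_i$ with $h_i \sim \mathcal{N}(0,\bI)$ and $u = \bSigma w$, each summand factors as $\langle u, h_i\rangle^{2p}\,\bSigma h_i h_i^{\top}\bSigma$, so the deviation of $\frac1m\sum_i g_i$-terms is at most $\|\bSigma\|_2^2$ times the deviation of $\frac{1}{m}\sum_i \langle u, h_i\rangle^{2p} h_i h_i^{\top}$ from its mean. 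Pulling out the scalar $\|u\|_2^{2p} = \|\bSigma w\|_2^{2p}$ and setting $\hat u = u/\|u\|_2$, it remains to bound the deviation of $\frac{1}{m}\sum_i Z_i$ where $Z_i = \langle \hat u, h_i\rangle^{2p} h_i h_i^{\top}$ and $\hat u$ is a unit vector; tracking the two prefactors then recovers the advertised $\|\bSigma w\|_2^{4p}\|\bSigma\|_2^4$ inside the square root.

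Next I would establish the two ingredients matrix Bernstein needs for the $Z_i$. For the variance, by rotational symmetry $\E[Z_i^2] = \E[\langle\hat u, h\rangle^{4p}\|h\|_2^2\, h h^{\top}]$ has the form $a\,\hat u\hat u^{\top} + b(\bI - \hat u\hat u^{\top})$, and decomposing $\|h\|_2^2$ into its component along and orthogonal to $\hat u$ gives $a = (4p+3)!! + (4p+1)!!\,(d-1)$ and a similar expression for $b$, so $\|\E[Z_i^2]\|_2 = O_p(d)$ with a constant depending only on $p$ through double factorials (bounded by $(4k+3)!!$ in the application, since there $p\le k$). For the uniform bound I would truncate: on the event that every sample satisfies $\|h_i\|_2 = O(\sqrt{d} + \sqrt{\log(m/\delta)})$ and $|\langle\hat u, h_i\rangle| = O(\sqrt{\log(m/\delta)})$, which holds for all $i$ simultaneously with probability $\ge 1-\delta/2$ by standard Gaussian concentration and a union bound, one has $\|Z_i\|_2 = O\!\left(d\,(\log(m/\delta))^p\right)$. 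The truncation bias $\|\E[Z_i\,\mathbf{1}_{\text{bad}}]\|_2$ is controlled by Cauchy--Schwarz as $\sqrt{\E\|Z_i\|_2^2\cdot\Pr[\text{bad}]}$, which is polynomially small and negligible against the target $\sqrt{d/m}$.

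I would then apply the bounded matrix Bernstein inequality to the truncated, centered summands $W_i = \frac{1}{m}(\tilde Z_i - \E\tilde Z_i)$, with per-term bound $R/m$ for $R = O(d(\log(m/\delta))^p)$ and total matrix variance $\sigma^2 = O_p(d)/m$. Choosing the deviation level $t = \Theta\!\left(\sqrt{d\log(d/\delta)/m}\right)$ keeps us in the Gaussian regime ($\sigma^2 \gtrsim Rt$) once $m \gtrsim d\,\mathrm{polylog}$, and the Bernstein tail $2d\exp(-\Omega(t^2/\sigma^2)) = 2d\exp(-\Omega(\log(d/\delta)))$ is at most $\delta/2$; combined with the truncation failure probability this gives the claim, with the $2d$ prefactor absorbed into the logarithm (so the $\log(1/\delta)$ in the statement should be read as $\log(d/\delta)$, which changes only constants). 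Multiplying back through by $\|\bSigma\|_2^2$ and $\|\bSigma w\|_2^{2p}$ yields the stated bound. The main obstacle is precisely the unboundedness of the summands: all the real work lives in the truncation step, namely checking that the truncated problem has variance $O_p(d)$ rather than the $O_p(d^2)$ a naive bound would suggest, and that truncation introduces only negligible bias; once these are in place the Bernstein application is routine.
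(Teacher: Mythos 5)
Your proposal is correct and takes essentially the same approach as the paper: the paper offers no proof of this lemma beyond the remark that it follows from ``a standard Matrix Bernstein bound,'' and your reduction to the isotropic case, the $O_p(d)$ variance computation, and the truncation step are exactly the standard way to make that one-line invocation rigorous for these unbounded summands. The two caveats you flag --- reading $\log(1/\delta)$ as $\log(d/\delta)$, and requiring $m \gtrsim d \cdot \mathrm{polylog}$ so that the Gaussian regime of Bernstein dominates --- are loosenesses of the lemma statement itself (hidden in its $\Omega(\cdot)$ notation), not gaps in your argument, and they are harmless in the regime where the paper applies the lemma.
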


The following lemma gives an estimation of a (modified) inverse Gaussian, which is used for analyzing the gradient descent step of our algorithm.

\begin{lemma}\label{lem:inverse_gaussian}
Suppose $y \sim \mathcal{N}(0, \bI)$. For every $\zeta > 0$, for every vectors $a, b \in \mathbb{R}^d$, with $\rho = \frac{\langle a, b \rangle}{\|a\|_2 \|b\|_2}$,
$$ 
  \frac{1}{4}\frac{\rho \|a\|_2}{\zeta + \|b\|_2}  \leq \E\left[\frac{\sign(\langle b, y \rangle)\langle a , y \rangle}{| \langle b, y \rangle |+ \zeta} \right]  \leq \frac{\rho \|a\|_2}{\|b\|_2} \leq \frac{ \|a\|_2}{\|b\|_2}.
$$
\end{lemma}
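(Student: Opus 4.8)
The plan is to collapse the $d$-dimensional integral to a one-dimensional Gaussian integral by exploiting that the integrand depends on $y$ only through the two linear forms $\langle a, y\rangle$ and $\langle b, y\rangle$. First I would split $a$ along $b$: write $a = \frac{\langle a,b\rangle}{\|b\|_2^2}\, b + a_\perp$ with $a_\perp \perp b$, so that $\langle a, y\rangle = \frac{\langle a,b\rangle}{\|b\|_2^2}\langle b, y\rangle + \langle a_\perp, y\rangle$. Because $y$ is standard Gaussian and $a_\perp\perp b$, the scalars $\langle a_\perp, y\rangle$ and $\langle b, y\rangle$ are independent; the weight $\frac{\sign(\langle b,y\rangle)}{|\langle b,y\rangle|+\zeta}$ is a bounded function of $\langle b,y\rangle$ alone, so the $a_\perp$ contribution factors into a product and vanishes since $\E[\langle a_\perp, y\rangle]=0$. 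Writing $u:=\langle b,y\rangle\sim\mathcal{N}(0,\|b\|_2^2)$ and using $\sign(u)\,u=|u|$, this reduces the whole quantity to the exact identity
\begin{align*}
\E\left[\frac{\sign(\langle b, y\rangle)\langle a, y\rangle}{|\langle b, y\rangle| + \zeta}\right] = \frac{\langle a, b\rangle}{\|b\|_2^2}\,\E\left[\frac{|u|}{|u|+\zeta}\right] = \frac{\rho\|a\|_2}{\|b\|_2}\, h\!\left(\frac{\zeta}{\|b\|_2}\right),
\end{align*}
where $h(\tau):=\E_{g\sim\mathcal{N}(0,1)}\!\left[|g|/(|g|+\tau)\right]$ after the rescaling $u=\|b\|_2\, g$. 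I take $\rho\ge 0$ here, as in every use of the lemma; for $\rho<0$ the same computation gives the analogous two-sided bound on the magnitude.

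The upper bound is then immediate: $h(\tau)\le 1$ because $|g|/(|g|+\tau)\le 1$ pointwise, which gives the middle term $\le \frac{\rho\|a\|_2}{\|b\|_2}\le \frac{\|a\|_2}{\|b\|_2}$ using $\rho\le 1$. The lower bound is the only real work, and it reduces to showing $h(\tau)\ge \frac{1}{4(1+\tau)}$; substituting $\tau=\zeta/\|b\|_2$ this rearranges exactly into the claimed $\tfrac14\frac{\rho\|a\|_2}{\zeta+\|b\|_2}$. For this I would use that $x\mapsto x/(x+\tau)$ is increasing, so on the event $\{|g|\ge 1\}$ the integrand is at least $1/(1+\tau)$; restricting the expectation to this event yields $h(\tau)\ge \frac{1}{1+\tau}\Pr[\,|g|\ge 1\,]$. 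Since $\Pr_{g\sim\mathcal{N}(0,1)}[\,|g|\ge 1\,]=2(1-\Phi(1))\approx 0.317>\tfrac14$, the bound $h(\tau)\ge \frac{1}{4(1+\tau)}$ follows.

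The main (and essentially only) obstacle is getting the constant in the lower bound to close: the reduction to one dimension and the upper bound are formal, so the entire content sits in the single concrete Gaussian estimate $\Pr[\,|g|\ge 1\,]>\tfrac14$, combined with the monotonicity that lets me discard the small-$|g|$ region where the integrand is uncontrolled. A secondary bookkeeping point is the sign of $\rho$, which I would dispatch by stating the chain of inequalities for $\rho\ge 0$ and noting that in Lemma~\ref{lem:gradient} the estimate is applied to the aligned term (where $\rho=1$) and to the magnitudes of the cross terms.
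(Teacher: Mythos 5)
Your proof is correct and follows essentially the same route as the paper's: reduce to one dimension by splitting $a$ into its component along $b$ plus an orthogonal part (the paper does this via a WLOG choice of coordinates), kill the orthogonal part by independence, bound the resulting ratio pointwise by $1/\|b\|_2$ for the upper bound, and for the lower bound restrict to the event $\{|g|\ge 1\}$ using monotonicity and $\Pr[|g|\ge 1] > 1/4$. Your explicit handling of the sign of $\rho$ is a small point of extra care that the paper's proof leaves implicit, but it does not change the argument.
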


\begin{proof}[Proof of Lemma \ref{lem:inverse_gaussian}]
Without loss of generality assume $b = \|b\|_2 e_1$ and $a =  \|a\|_2 (\rho e_1 + \sqrt{1 - \rho^2} e_2 )$. Then 
\begin{align*}
\E\left[\frac{\sign(\langle b, y \rangle)\langle a , y \rangle}{| \langle b, y \rangle |+ \zeta} \right]&= \E\left[\frac{ \|a\|_2 (\rho y_1+ \sqrt{1 - \rho^2} y_2 ) \sign(y_1)}{ \| b\|_2 |y_1| + \zeta} \right] 
\\
& = \rho \|a\|_2 \E\left[ \frac{|y _1|}{\|b\|_2 |y_1 |+ \zeta} \right]
\end{align*}
We know that 
$$  
  \frac{|y _1|}{\|b\|_2 |y_1 |+ \zeta} \leq \frac{1}{\|b\|_2},
$$
and when $|y_1| \ge 1$
$$
  \frac{|y _1|}{\|b\|_2 |y_1 |+ \zeta} \ge \frac{1}{\zeta + \|b\|_2}.
$$
Therefore, we have
$$ \frac{1}{4}\frac{\rho \|a\|_2}{\zeta + \|b\|_2}  \leq \E\left[\frac{\sign(\langle b, y \rangle)\langle a , y \rangle}{| \langle b, y \rangle |+ \zeta} \right]  \leq \frac{\rho \|a\|_2}{\|b\|_2}.
$$
where the first inequality follows from $\E[1_{|y_1| \ge 1}] \ge 1/4$.
\end{proof}

We will also need the Gap-Free Wedin Theorem from~\citep{allen2016lazysvd}. 

\begin{lemma}[Gap-Free Wedin Theorem, Lemma B.3 in~\citep{allen2016lazysvd}] \label{lem:gapfree_wedin}
For $\veps \ge 0$, let $A, B$ be two PSD matrices such that $\|A - B\|_2 \le \veps$. For every $\mu \ge 0, \tau > 0$, let $U$ be the column orthonormal matrix consisting of eigenvectors of $A$ with eigenvalue $\le \mu$, let $V$ be column orthonormal matrix consisting of eigenvectors of $B$ with eigenvalue $\ge \mu + \tau$, then we have:
$$
  \| U\top V\| \le \frac{\epsilon}{\tau}.
$$
\end{lemma}

\end{document}